\documentclass[runningheads]{llncs}
\usepackage[ruled,vlined]{algorithm2e}
\usepackage[utf8]{inputenc}
\usepackage{graphicx}
\usepackage{amsmath}
\usepackage{amssymb}
\usepackage{wasysym}
\usepackage{booktabs}
\usepackage{subcaption} 
\captionsetup{compatibility=false}

\captionsetup[subfigure]{subrefformat=simple,labelformat=simple}

\usepackage{graphicx}

\begin{document}
\title{Interpretable Models via Pairwise permutations algorithm}
%
%

\author{Troy Maasland\inst{2\star} \and João Pereira\inst{1,2}\thanks{Equal contribution to this work} \and Diogo Bastos \inst{1,2}
 \and Marcus de Goffau\inst{1}\and
Max Nieuwdorp\inst{1} \and Aeilko H. Zwinderman\inst{3}\and Evgeni Levin\inst{1,2}}
\authorrunning{T.D. Maasland et al.}
%
\institute{
Department of Vascular Medicine, Academic Medical Center, University of Amsterdam, 1105 AZ Amsterdam, the Netherlands\and HORAIZON BV, Delft, The Netherlands \and
Department of Clinical Epidemiology, Biostatistics and Bioinformatics,  \\University Medical Center Amsterdam, The Netherlands}
%
\maketitle              
\begin{abstract}
One of the most common pitfalls often found in high dimensional biological data sets are correlations between the features. This may lead to statistical and machine learning methodologies overvaluing or undervaluing these correlated predictors, while the truly relevant ones are ignored. In this paper, we will define a new method called \textit{pairwise permutation algorithm} (PPA) with the aim of mitigating the correlation bias in feature importance values. Firstly, we provide a theoretical foundation, which builds upon previous work on permutation importance. PPA is then applied to a toy data set, where we demonstrate its ability to correct the correlation effect. We further test PPA on a microbiome shotgun dataset, to show that the PPA is already able to obtain biological relevant biomarkers.

\keywords{permutation \and importance \and correlation \and PPA \and Diabetes.}
\end{abstract}
%
%
\section{Introduction}
Measuring feature importance has often been plagued by high feature correlations. One important drawback is the lack of a theoretical definition for variable importance, in case variables are correlated \cite{gromping} \cite{gregorutti}, even in linear models \cite{gromping}. From a clinical perspective, correlated biomarkers are of high interest because they both may play a role in a shared biological pathway identified by the model and yet exhibit different behaviour in other circumstances.
The method proposed in this paper, which will be referred to as \textit{pairwise permutation algorithm} (PPA), allows us to calculate the importance of features without having to rely on the previously mentioned selection approaches. Highly correlated features, which have a similar relation with the output value, should have close importance ranks since they explain the same variability in the data. The \textit{pairwise permutation algorithm} aims to provide feature importance values while avoiding the use of aggressive pre-selection techniques, since these techniques might remove relevant information from the data. It also manages to retain model interpretability by generating an importance value per feature, even when applied to black box models. Moreover, when working with highly dimensional biological data sets, it is simply not feasible to try and address each of the correlations in the data individually. 

\section{Related Work}

In this work, we focus on model-agnostic procedures which can be divided into local and global methods.
Local-based methods such as LIME 
 and its variants \cite{lime,gse} attempt to explain predictions on single data points by perturbing it and building a simple, yet interpretable model on the perturbed predictions. Similarly, SHAP 
 \cite{shap}, offers a local explanation based on the additional prediction value each feature has when adding it to all possible feature subsets. 
Unlike local-based methods, global methods are concerned with determining the overall model behaviour and what features it values for its prediction. For example, in clinical research, the goal is to determine biomarkers that can identify a condition in the general population, or potential targets for novel drug development. Therefore, in this setting, we are mainly concerned with a more holistic view of feature importance i.e. global. A notable example is that of permutation importance which was first introduced by Breiman \cite{rf} in random forests as a way to  understand the interaction of variables that is providing the predictive accuracy. Suppose that for a certain feature $i$ in data-set $\bold{X}$, we randomly permute the instances' values, and denote the resultant data-set by $\bold{X}_{i}^{\pi}$. Permutation importance is defined as the difference in the expected model loss on the original dataset and the original one:

\begin{equation}\label{fisher}
    PI_{\{i\}}(f) := E\left[f(\bold{X}_{i}^{\pi})\right] - E\left[f(\bold{X})\right]
\end{equation}

For random forests, there is already available work that analyzes the behaviour of this permutation importance, including the cases when high correlations are present. Gregorutti et al \cite{gregorutti} provided a theoretical description of the effect of correlations on the permutation importance, a phenomenon already observed by Toloși and Lengauer \cite{gregorutti} \cite{tolosi}. Furthermore, a feature selection procedure was introduced, which was more efficient in selecting important, highly correlated variables\cite{gregorutti}.
Strobl et al  showed that the larger feature importance values for correlated predictors in random forests were due to the preference for such predictors in the early splits of the trees. A new conditional permutation-based feature importance calculation was suggested, in order to circumvent this inflation, as well as the depreciation for its correlated predictor \cite{cond_rf}. Furthermore, Hooker and Mentch proposed the 'permute and relearn' approach \cite{stop_permuting}. Based on this approach we define the relearned permutation importance as

\begin{equation}\label{relearn}
    PI_{j}^{\pi L} = E\left[f^{\pi j}(\bold{X_{t}})\right] - E\left[f(\bold{X_{t}})\right]
\end{equation}

In which $f^{\pi j}$ is the model trained on the train dataset $\bold{X^{\pi j}}$, in which feature j is permuted, $f$ the model trained on the original train dataset $\bold{X}$ and $\bold{X_{t}}$ the test dataset. One drawback of this approach was also mentioned in the context of correlated features, as this resulted in the compensation effect, in which the importance of the correlated features was reduced \cite{stop_permuting}.
Local based methods, such as the ones introduced earlier, are focused on the contribution of each feature towards individual predictions, whereas permutation importance gives us a more broad estimation, since it is based on the overall accuracy of the model. While the former approach provides a higher degree of interpretability, the latter is usually more appropriate in a research environment, in which the aim would be to discover new leads which could help researchers to investigate the underlying biological mechanisms.

\section{Pairwise Permutations Algorithm}

\subsection{Notation}

We will refer to a single instance of the data-set as instance or point interchangeably throughout the paper. We denote matrices, 1-dimensional arrays and scalars with capital bold and regular text, respectively (e.g. $\mathbf{X},\; \mathbf{x},\; \alpha$). Matrices' columns and rows will be denoted by $\mathbf{X}[:, i]$ and $\mathbf{X}[j, :]$, respectively. The expected loss of a function given by: $\frac{1}{N}\sum_{i=1}^{N}l\left[y,f(\bold{x}_i)\right]$ will be denoted by $E\left[f(\bold{X})\right]$.

\subsection{Intuition}
Features that are equally important for the output value should have similar feature importance ranks, and these should not be affected by feature correlation. In an attempt to prevent the compensation effect for correlated features mentioned by Hooker and Mentch, we have chosen to permute all the feature pairs and calculate the corresponding permutation importance of the pair. 
A key assumption in our method is that the higher the correlations, the larger should be the correction to that feature individual importance.


\subsection{Definition}

In this section, we define the pairwise permutation importance (PPI) as the weighted average of the permutation importance values, computed using the 'permute and relearn' approach defined in Equation \ref{relearn}. The correlations between the feature pairs will act as the weights. Let $\bold{R}$ be the correlations matrix between all the features and $\bold{R}_{i,j}$ the correlation value between features i and j. Let $PI_{i,j}$ define the relearn permutation importance (see Equation \ref{relearn}) when both the feature $i$ and $j$ have been permuted together, and $PI_{i,i}$ the relearn permutation importance, when only feature $i$ is permuted.

\begin{equation}\label{pairwise_perm_def}
PPI_i =\frac{1}{\underbrace{\sum\limits_{j=1}^M |\bold{R}_{i,j}|}_{q}} \underbrace{\left(PI_{i,i} + \sum\limits_{\substack{j=1\\j \neq i}}^M |\bold{R}_{i,j}|\cdot PI_{i,j}\right)}_{p}
\end{equation}

Note that when a feature has no correlations in the data, according to the previous equation, the PPI will actually follow the relearn permutation importance, which in our terminology is the single permutation importance (SPI). Since for complex data sets with thousands of features the computational time can become infeasible ($\mathcal{O}(N^2)$), one possible simplification is to set a threshold and consider only the permutation pairs with a correlation above it. We define this procedure in algorithm \ref{alg1}.

\subsection{Expected Difference}\label{exp_diff}

It might be tempting to compute the expected loss of the model, perform the permutation analysis and then compute the difference of the expected losses. This is actually how Fisher et al. \cite{fisher} defined the permutation importance. However, we note that this procedure is sub-par as we show in the following theorem:
\begin{theorem}
	For a given function $f:\mathbb{R}^M\rightarrow\mathbb{R}$, let $\bold{X}$ and $\bold{x}$ be a sample and an instance from the domain of $f$, respectively,  $\bold{X}^{\epsilon}_i$ be $\bold{X}$ with permuted values for the r.v. $X_i$ and $\tilde{\bold{x}}$ an instance from $\bold{X}^{\epsilon}_i$. Then, for any loss function $l\left[y, f(\bold{x})\right]$ and norm function $||\cdot||:\mathbb{R}^M\rightarrow\mathbb{R}$ it holds that:\\ $\bold{E}\left[\lVert l\left[y, f(\bold{x})\right]-l\left[y, f(\tilde{\bold{x}})\right] \rVert\right]
	\geq 
	\lVert \bold{E}\left[l\left[y, f(\bold{x})\right]\right]-\bold{E}\left[l\left[y, f(\tilde{\bold{x}})\right]\right] \rVert $
\end{theorem}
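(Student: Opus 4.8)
The plan is to recognize that this inequality is, at bottom, nothing more than Jensen's inequality (equivalently, the triangle inequality applied to an average) for the convex map $\lVert\cdot\rVert$, once the two terms on the right-hand side have been merged by linearity of the expectation.

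First I would set $W := l\left[y,f(\mathbf{x})\right] - l\left[y,f(\tilde{\mathbf{x}})\right]$ for the random pointwise difference of the losses, keeping in mind that $\tilde{\mathbf{x}}$ is obtained from $\mathbf{x}$ by permuting the values of $X_i$ on the \emph{same} sample, so that $E\left[l[y,f(\mathbf{x})]\right]$ and $E\left[l[y,f(\tilde{\mathbf{x}})]\right]$ are taken with respect to one and the same averaging operator $E[\cdot]=\frac1N\sum_{i=1}^N(\cdot)$. By linearity of $E$, the right-hand side is then exactly $\left\lVert E[W]\right\rVert$, while the left-hand side is $E\left[\lVert W\rVert\right]$. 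Hence it suffices to prove the single inequality $E\left[\lVert W\rVert\right]\ge \left\lVert E[W]\right\rVert$.

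For that step I would first record that any norm is convex: for vectors $a,b$ and $\lambda\in[0,1]$, absolute homogeneity together with subadditivity gives $\lVert \lambda a+(1-\lambda)b\rVert \le \lambda\lVert a\rVert + (1-\lambda)\lVert b\rVert$. Iterating this over the $N$ sample points with uniform weights $1/N$ yields $\left\lVert \frac1N\sum_{i=1}^N W_i\right\rVert \le \frac1N\sum_{i=1}^N \lVert W_i\rVert$, which is precisely $\left\lVert E[W]\right\rVert \le E\left[\lVert W\rVert\right]$ (in the abstract setting this last line is just Jensen's inequality for the convex function $\lVert\cdot\rVert$). Combining with the previous paragraph proves the theorem. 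I would additionally note that equality here forces all the $W_i$ to be non-negative scalar multiples of one common vector — generically false — which is exactly why defining permutation importance through the difference of expectations understates the true discrepancy.

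Since the argument reduces to a two-line application of a standard inequality, I do not anticipate a genuine obstacle. The only point that needs real care is the very first step: being explicit that $\mathbf{x}$ and $\tilde{\mathbf{x}}$ are defined on the same probability space (the same finite sample, with $X_i$ permuted), because only then does linearity let us collapse the two expectations on the right into a single $E[\,\cdot\,]$ before convexity is invoked. A secondary, purely bookkeeping matter is to confirm that a ``norm'' in the stated sense is indeed convex, which the homogeneity-plus-subadditivity remark above settles.
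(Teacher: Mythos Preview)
Your argument is correct and is essentially identical to the paper's own proof: both set $\varphi(x)=\lVert x\rVert$, invoke Jensen's inequality for this convex map, and then use linearity of expectation to split $\bold{E}[W]$ into the difference of the two expected losses. The extra remarks you make (explicit convexity of a norm, common probability space, equality case) are elaborations rather than a different route.
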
 
\begin{proof}
	Consider the following convex function $\varphi(x)=\lVert x\rVert$ for $x=l\left[y, f(\bold{x})\right]-l\left[y, f(\tilde{\bold{x}})\right]$. Then, by Jensen's inequality: 
	\begin{equation*}
	\begin{aligned}
	\bold{E}\left[\varphi(x)\right]\geq\varphi\left(\bold{E}\left[x\right]\right)\Leftrightarrow & \bold{E}\left[\lVert l\left[y, f(\bold{x})\right]-l\left[y, f(\tilde{\bold{x}})\right] \rVert\right] \geq \lVert\bold{E}\left[ l\left[y, f(\bold{x})\right]-l\left[y, f(\tilde{\bold{x}})\right]\right] \rVert\\
	&=
	\lVert\bold{E}\left[ l\left[y, f(\bold{x})\right]\right]-\bold{E}\left[l\left[y, f(\tilde{\bold{x}})\right]\right] \rVert 
	\end{aligned}
	\end{equation*} \qed
\end{proof}
This means that computing the expected value of the normed difference of individual loss values is more robust to non-linear relationships between the input variables then computing the difference of the normed expected loss values.

\begin{algorithm}[h!]
  \SetAlgoLined
  
  \caption{Pairwise permutations algorithm}
  \BlankLine
  \KwIn{$\bold{X}$, $\bold{X_{t}}$, $\bold{y_{test}}$, $E\left[f(\bold{X_{t}})\right]$, $\bold{R}$, $\alpha$}
  \KwOut{$\bold{v}$}
  \BlankLine
  \For{feature $i$ in $\bold{X}$}{
    $p\gets0$, $q\gets0$ (equation \ref{pairwise_perm_def})\;
    
    \For{feature $j$ in $\bold{X}$}{
        \If{$|\bold{R}_{i,j}|>\alpha$}{

        Permute the feature pair ($i$, $j$) together in $\bold{X}$\;
        Retrain the model with the permuted input data $\bold{X_{i,j}^{\pi}}$\;
        Calculate the model's error, $E\left[f^{\pi, i, j}(\bold{X_{t}})\right]$, on the test data\;
        Calculate $PI_{i,j}$ through the relearn formula $ E\left[f^{\pi, i, j}(\bold{X_{t}})\right] - E\left[f(\bold{X_{t}})\right]$\;
        \eIf{i=j}{
        $p \gets p + PI_{i,i}$\;
        }{
        $p \gets p + |\bold{R}_{i,j}| \cdot  PI_{i,j}$\;
        }
        $q\gets q + |\bold{R}_{i,j}|$\;}
    }
    $PPI_{i}\gets p / q$\;
    $\bold{v}$.append($PPI_{i}$)\;
  }
\end{algorithm}\label{alg1}

\section{Simulations with toy dataset}
To see how our new PPA would behave for correlated features, we generated a toy dataset, based on the one used by Hooker and Mentch \cite{stop_permuting}. The data was created by assuming a linear regression model:
\begin{equation}\label{linear_formula}
y_i = x_{i1} + x_{i2} + x_{i3}+ x_{i4} + x_{i5} + 0x_{i6} + 0.5x_{i7} + 0.8x_{i8} + 1.2x_{i9} + 1.5x_{i10}.
\end{equation}
This was then turned into a classification model, by generating the binary outcome y with the classification rule in Equation \ref{eq:classifrule} with with $\varepsilon \sim N(0, 0.1)$.
\begin{equation}\label{eq:classifrule}
y_i = 
\begin{cases}
1, & \text{for } y_i + \varepsilon_i \geq \overline{y}\\
0, & \text{otherwise}\end{cases},
\end{equation}
All features were generated from a multivariate normal distribution $N(0, \Sigma)$ with $\Sigma$ equal to the identity matrix, except that $\Sigma_{12} = \Sigma_{21} = \rho = 0.9$. All features were then transformed into a uniform distribution, mimicking how Hooker and Mentch \cite{stop_permuting} generated their data. In total, 1000 samples were generated.\\
In case the features are in the same scale, the coefficients in the linear model can be seen as the conditional importance of the feature on all other variables \cite{cond_rf} \cite{stop_permuting}. Therefore, based on the magnitude of the coefficients, we can rank the features on their importances, where features with the same coefficients should be equally important, while a feature with a higher coefficient should get higher importance than a feature with a lower coefficient. The order of the features should not be affected by any correlations between the features.\\
Using XGBoost with the logistic loss function as the classification algorithm \cite{xgboost1,xgboost2}, we performed $50$ stratified shuffle splits ($70\%$train/$30\%$test) and measured the ROC AUC after adding a noise feature to the dataset and standard scaling it. We found the optimal hyperparameters using a 5-fold cross validation grid search. 
To compute the PPIs, a correlation threshold of 0.3 was used. Also, the SPIs were obtained.

\subsection{Results}
\begin{figure}[b!]
\centering
\begin{subfigure}[b]{0.3\linewidth} 
\includegraphics[width=\linewidth]{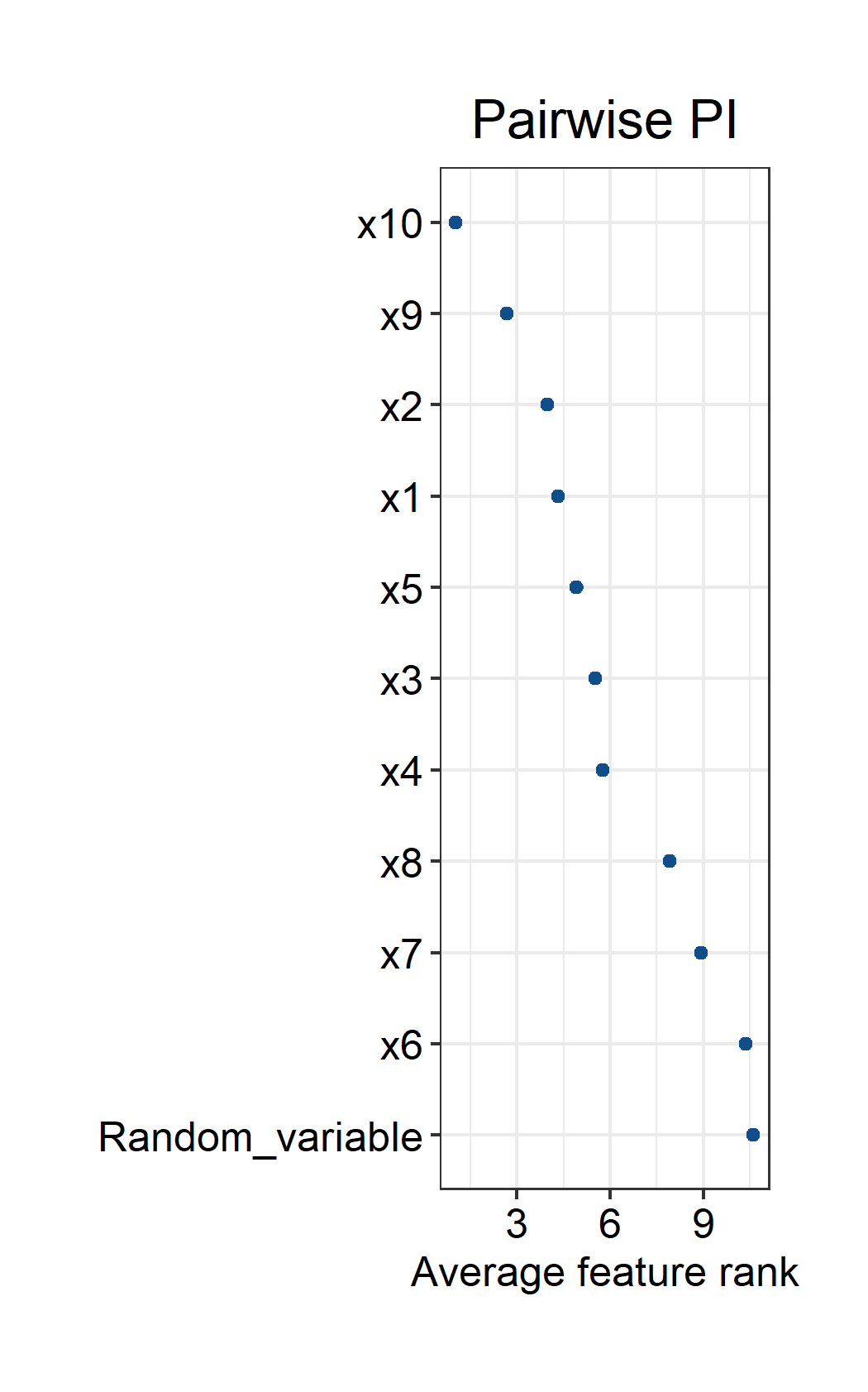}
\caption{$x_1$ and $x_2$ with $\rho = 0.9$}
\label{fig:lin_features_a}
\end{subfigure}
\begin{subfigure}[b]{0.3\linewidth} 
\includegraphics[width=\linewidth]{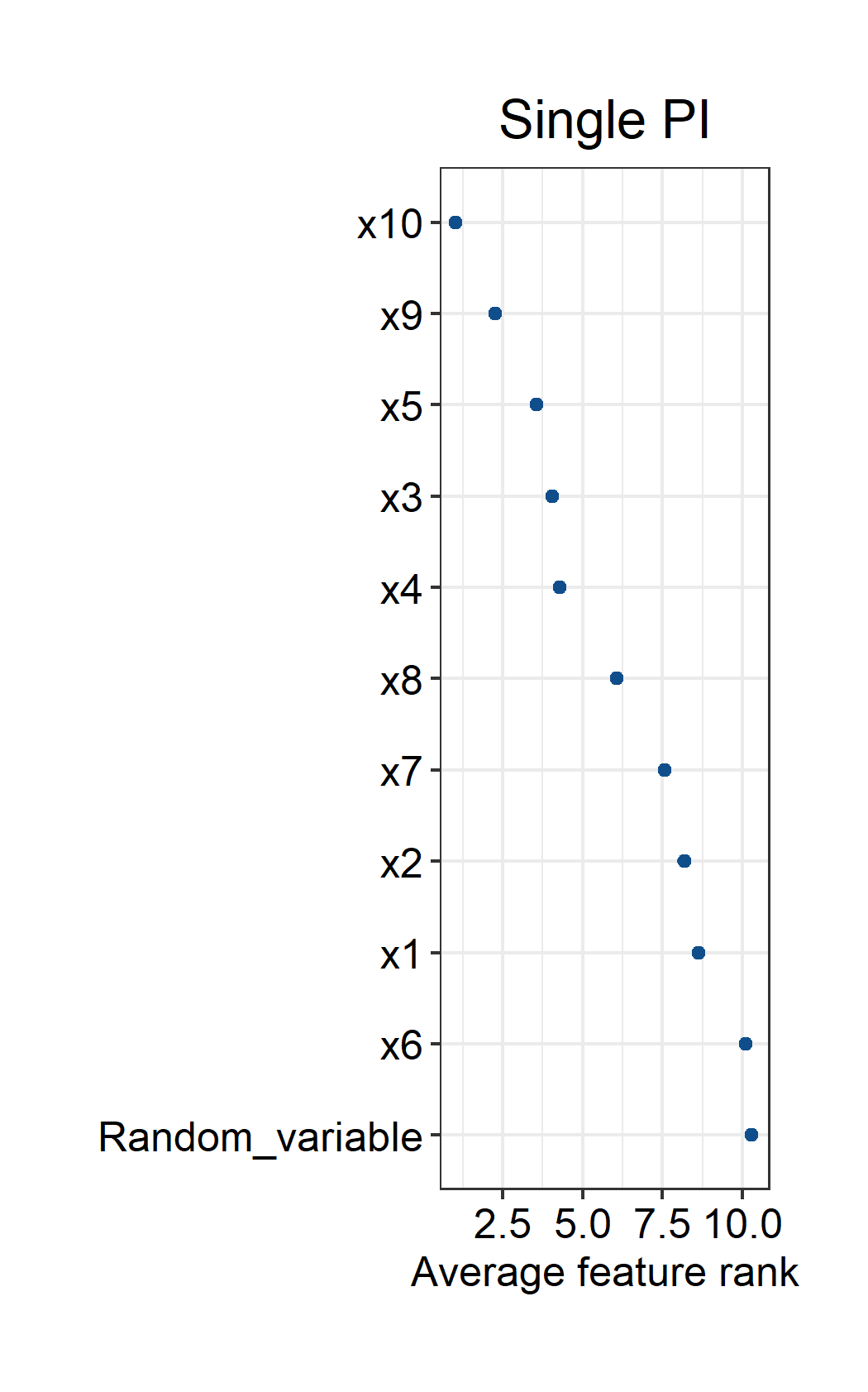}
\caption{$x_1$ and $x_2$ with $\rho = 0.9$}
\label{fig:lin_features_b}
\end{subfigure}
\begin{subfigure}[b]{0.3\linewidth}
\includegraphics[width=\linewidth]{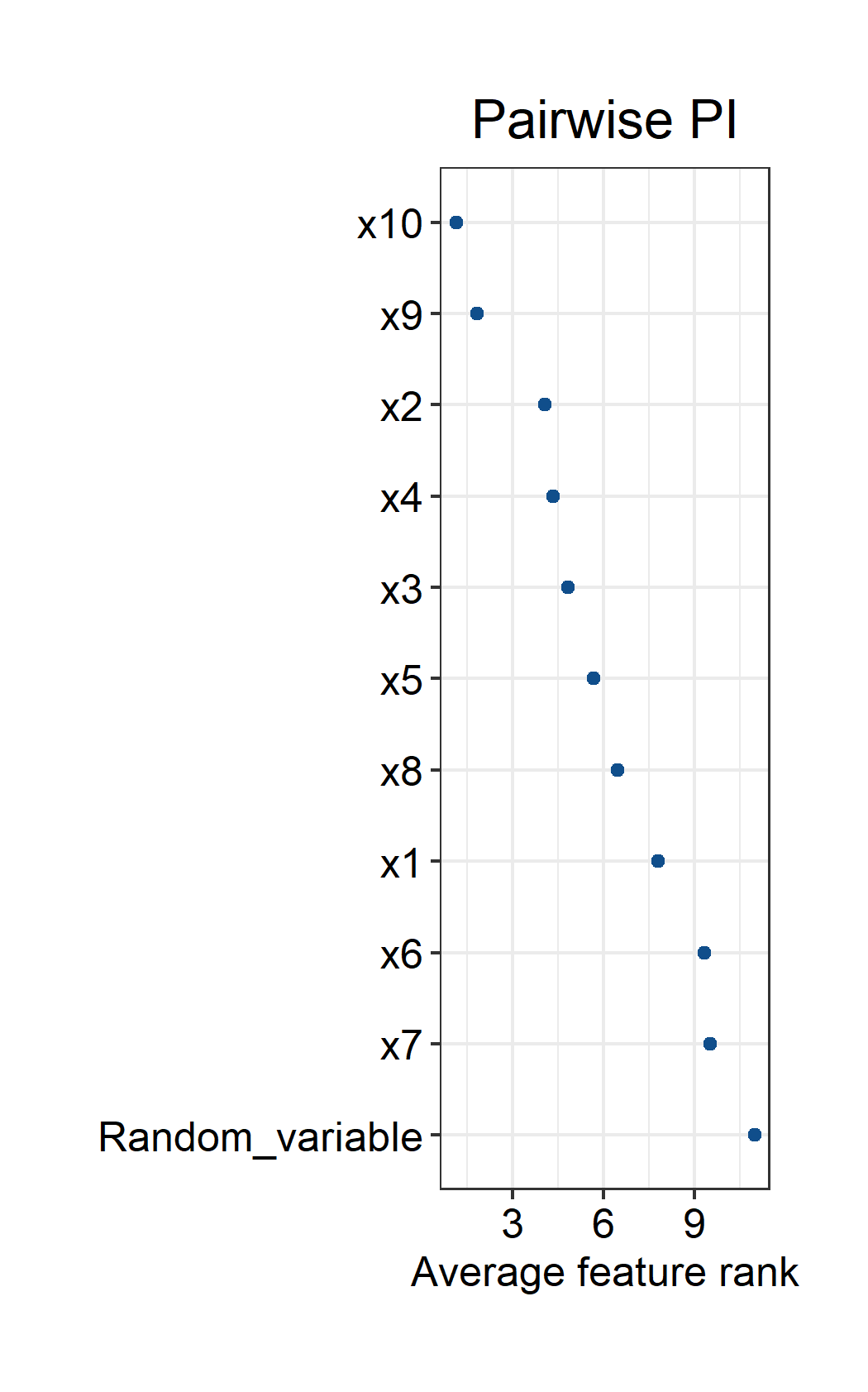}
\caption{$x_1$ and $x_6$ with $\rho = 0.9$}
\label{fig:lin_features_c}
\end{subfigure}
\caption{Average rank $\pm$ standard error for each feature based on the pairwise permutation importance algorithm for (a) and (c) and the single permutation importance algorithm for (b). } 
\label{fig:lin_features}
\end{figure}

The classification model obtained an average AUC of $0.97 \pm 0.01$. As shown by the average feature ranks in Figure \ref{fig:lin_features_a}, our new PPA is able to retrieve the right order of feature importances, in which $x_{10}$ is clearly the most important one, followed by $x_9$. As expected, $x_6$ and the random variable are identified as the least important features.
The results for the SPI are shown in Figure \ref{fig:lin_features_b}. It is clearly shown that the PPA outperforms this approach, as the SPI decreased the importance of the correlated features $x_1$ and $x_2$ and was not able to retrieve the right order of feature importances. This was also observed for the random forest algorithm by Hooker and Mentch \cite{stop_permuting}.
The toy dataset showed that in case two features have the same coefficient in the linear model and are correlated, the PPA is able to retrieve the right order for the feature importances. We also analysed the effect of a correlation of $\rho = 0.9$ between $x_1$ and $x_6$, by changing the covariance matrix $\Sigma$ to $\Sigma_{16} = \Sigma_{61} = 0.9$ and setting the correlation between $x_1$ and $x_2$ to 0. This represents a case in which an important feature is correlated to an irrelevant feature. However, we saw in this case that the importance of $x_1$ was decreased by $x_6$, while the importance of $x_6$ was increased by $x_1$, as shown in Figure \ref{fig:lin_features_c}. This could be expected as the grouped importance is shared equally between both features, while in the case of features with different importances, this might not be the right assumption. In this case, the PPA may not be the appropriate choice 

\section{Microbial biomarkers for Type 2 Diabetes Mellitus}
In this section we test the PPA on a real-world dataset, specifically microbiome data. The goal is to obtain biologically relevant markers. 
Therefore, we downloaded the Qin 2012 microbiome dataset from MLRepo \cite{mlrepo}, \cite{T2DQin}. This curated classification dataset contained shotgun data for 124 samples, representing Chinese healthy controls (n = 59) and Type 2 Diabetes Mellitus (T2D) patients (n = 65). For full details of the preprocessing of the raw sequence reads for datasets in MLRepo, see \cite{mlrepo}.
We used the same procedure as in the previous section with some additional preprocessing. First, the read counts were rarefied to 28 358 reads per sample, which was the lowest observed number of reads in a sample. After that, features with less than 6 reads per sample on average, representing a relative abundance of 0.02\%, were removed. The final dataset consisted then of 124 samples with 377 microbial OTUs.  
\subsection{Results}
The classification model was able to achieve an average ROC AUC score of 0.92 $\pm$ 0.05, as depicted in figure \ref{fig:ROC_AUC}. Figure \ref{fig:feature_importance_microbes} represents the top 15 most predictive OTUs in the classification model. 
Analyzing these OTUs (and several more beyond the top 15) primarily highlights 2 main patterns. The strongest pattern observed in the data, most likely represents an effect that T2D has on the dietary behavior of these Chinese T2D patients. \textit{Lactobacillus acidophilus, Acidaminococcus intestini} and \textit{Anaerostipes caccae} are strongly associated with T2D and with each other in this dataset. A regular dose of \textit{L. acidophilus} is commonly recommended in Chinese Medicine \cite{ChineseHand}. Fermented soybean products are popular in China (i.a.) and various of these products commonly contain \textit{L. acidophilus} \cite{Marcus_1}, \cite{Marcus_2}, \cite{Marcus_3}. Indeed, there is evidence that supports the beneficial claims regarding these fermented products and T2D \cite{Marcus_4}, \cite{Marcus_5}. Trans-aconitic acid in the urine is a biomarker for the consumption of soy products \cite{Marcus_6} and \textit{A. intestini} is known to be able to oxidise trans-aconitate \cite{Marcus_7} converting it to acetate. \textit{A. caccae}, is an acetate and lactate consuming butyrate producer. Cross-feeding interactions between \textit{L. acidophilus} and \textit{A. caccae} have been analyzed in detail in vitro \cite{Marcus_8}. Other butyrate producing species, like \textit{Roseburia intestinalis}, can have similar cross-feeding interactions \cite{Marcus_9} but were not part of this specific pattern, but with the 2nd main pattern (see below), suggesting that \textit{A. caccae} was part of same fermented soybean product popular with, or given to, these Chinese T2D patients that likely also contained \textit{L. acidophilus} and \textit{A. intestini}.

\begin{figure}[tb]
\centering
  \begin{subfigure}[b]{0.43\textwidth}
    \includegraphics[width=0.95\textwidth]{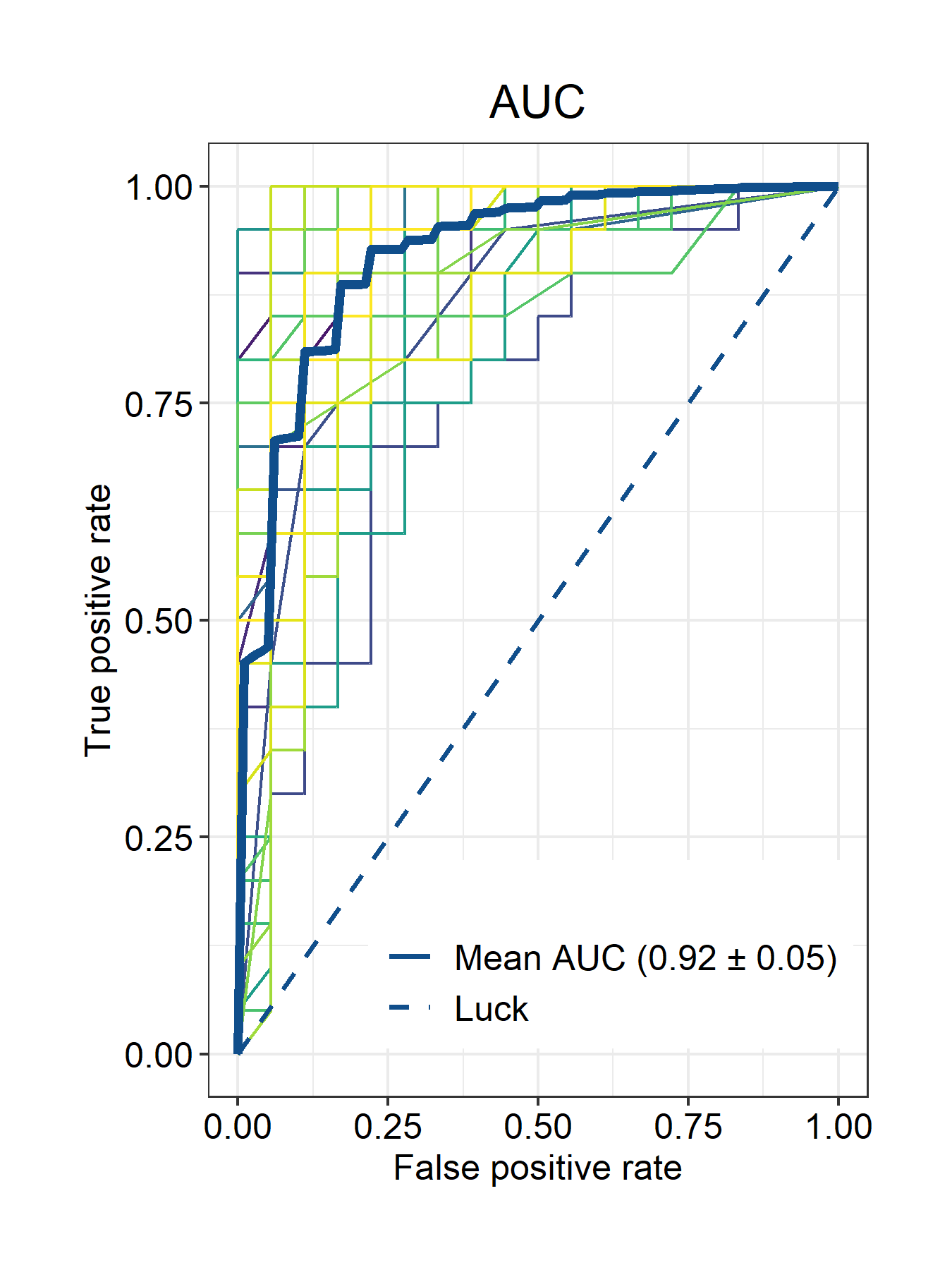}
    \caption{The ROC-AUC plot.}
    \label{fig:ROC_AUC}
  \end{subfigure}
  \begin{subfigure}[b]{0.465\textwidth}
    \includegraphics[width=0.95\textwidth]{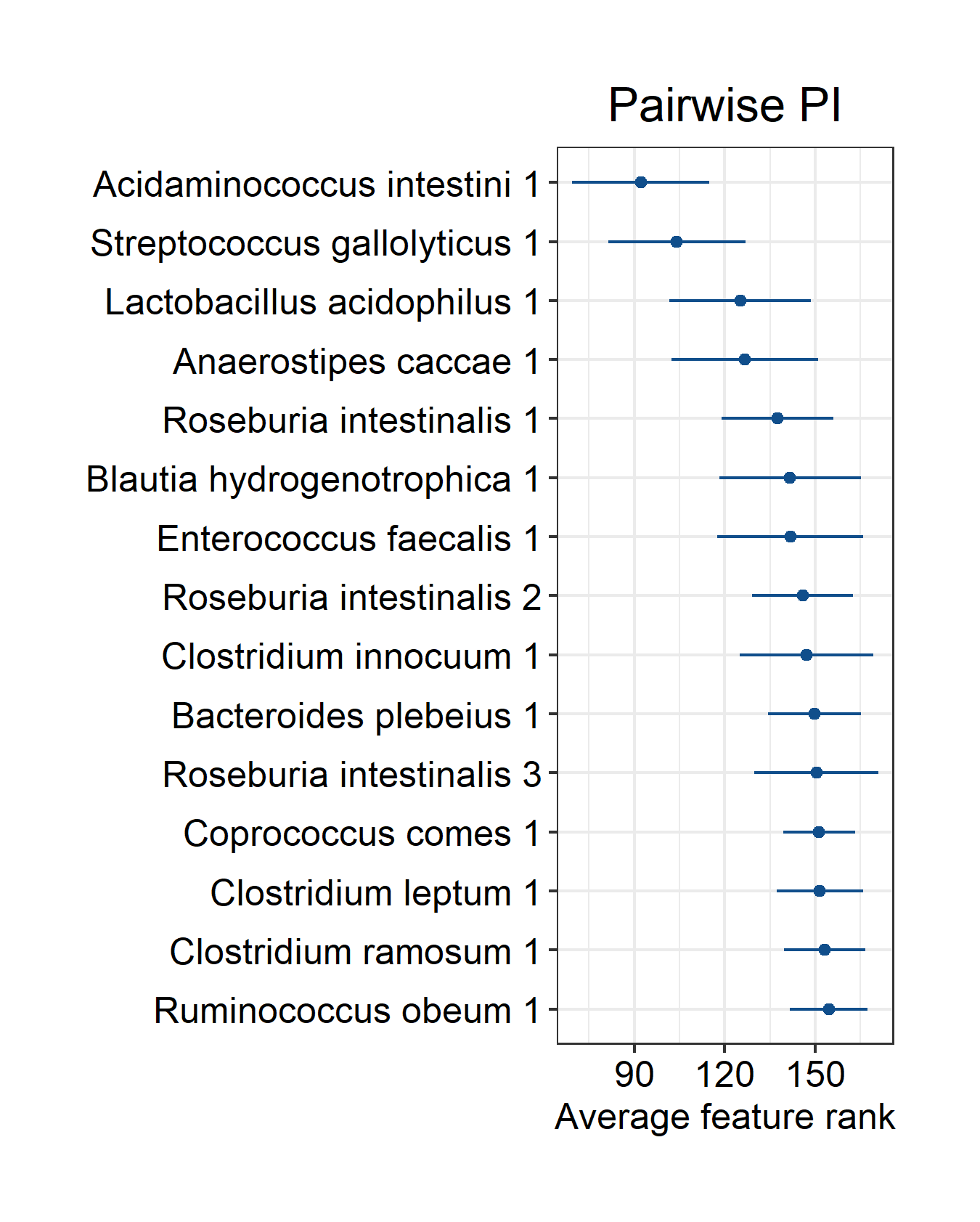}
    \caption{Top 15 feature importance ranks.}
    \label{fig:feature_importance_microbes}
  \end{subfigure}
\caption{(a) Individual ROC AUC curves for each shuffle and average ROC AUC plot for all shuffles. (b) Average rank $\pm$ standard error for the top 15 ranked features based on the pairwise permutation importance algorithm. }
\end{figure}

The second pattern involves several butyrate producers (the \textit{Roseburia, Faecalibacterium, Coprococcus} genera, several \textit{Eubacterium} species and \textit{Anaerostipes hadrus}) in a cross-feeding relationship with various acetate producing dietary fibre degrading species (\textit{Blautia} and \textit{Ruminococcus} representatives). This cluster of species is generally found to be negatively associated with T2D, not just in this study throughout the diabetes microbiome field \cite{T2DQin,Marcus_10,Marcus_11,Marcus_12,Marcus_13}. Insufficient butyrate production has been associated with both T1D and T2D development both in rats, mice and in humans \cite{Marcus_14}, \cite{Marcus_15}, \cite{Marcus_16}, \cite{Marcus_17}. Besides being used by colonocytes as a primary energy source \cite{Marcus_18} butyrate is a powerful inhibitor of histone deacetylase, which has emerged as a target in the control of insulin resistance \cite{Marcus_19}, \cite{Marcus_20}, \cite{Marcus_21}. Animal and in vitro studies have generally found a beneficial effect of butyrate and acetate on glucose homeostasis and insulin sensitivity \cite{Marcus_22}. 
%
%
\section{Conclusions}
In this paper, we have set a first step in correcting the compensation effect, observed for 'permute and relearn' permutation importances in case correlated features are present. Our new PPA is able to obtain the right ranking of features, when two features are highly correlated and have the same importance, stated by the magnitude of their coefficient, in linear models. Furthermore, while not yet optimal for correlations between more than 2 features or correlated features with unequal importance related to the output variable, our PPA is already able to obtain relevant biological insights in a Chinese Type 2 Diabetes microbiome dataset.

\bibliographystyle{splncs04}

\section{Acknowledgments}
We would like to thank Manon Balvers for helping with computational experiments.

\end{document}